\newcommand{\MinLength}[3]{%
\ifthenelse{\lengthtest{\the#1<\the#2}}%
           {\setlength{#3}{#1}}%
           {\setlength{#3}{#2}}%
}
\newcommand{\MaxLength}[3]{%
\ifthenelse{\lengthtest{\the#1>\the#2}}%
           {\setlength{#3}{#1}}%
           {\setlength{#3}{#2}}%
}
\newlength{\xscale}
\newlength{\yscale}
\newlength{\mscale}
\newcommand{\setstringscale}[2] {%
	\setlength{\xscale}{#1}%
	\setlength{\yscale}{#2}%
	\MinLength{\xscale}{\yscale}{\mscale}%
}
\newcommand{\stringdisplaystyle} {%
	\setstringscale{0.33cm}{0.23cm}%
}
\newcommand{\stringinlinestyle} {%
	\setstringscale{0.2cm}{0.12cm}%
}
\newlength{\boxscale}
\newcommand{\autostringstyle}{%
    \if@display
	\stringdisplaystyle
    \else
	\stringinlinestyle
    \fi
}
\newcommand{\wire}[1][1]{\draw[rounded corners=#1\mscale]}
\newcommand{\fg}[1]{\begin{pgfonlayer}{foreground}#1\end{pgfonlayer}}
\newcommand{\blackdot}[1]{\fg{\draw[fill=black] (#1) circle (0.25\mscale);}}
\newcommand{\sqbox}[3]{
\fg{
	\draw (#1) node[
		fill=white,
		rounded rectangle,
		draw,
		minimum height=#3\boxscale,
		inner sep = 1pt,
		rounded rectangle left arc=none,
		rounded rectangle right arc=none,
		rounded rectangle arc length = 180,
	] {\small #2};}
}
\newcommand{\stoch}[3]{
\fg{
	\draw (#1) node[
		fill=white,
		rounded rectangle,
		draw,
		minimum height=#3\boxscale,
		inner sep = 1pt,
		rounded rectangle left arc=none,
		rounded rectangle arc length = 120
	] {\small #2};
}
}
\newcommand{\ulabel}[2]{\fg{\draw (#1) +(0,5pt) node[inner sep = 0] {\scriptsize $#2$};}}
\newcommand{\stringdiagram}[1]{%
\mbox{
\autostringstyle%
\begin{tikzpicture}[x=\xscale, y=\yscale, baseline={([yshift=-0.45ex]current bounding box.center)}]
\pgfdeclarelayer{grid layer}
\pgfdeclarelayer{foreground}
\pgfsetlayers{grid layer,main,foreground}
#1
\end{tikzpicture}%
}}
\newcommand{\pbox}[2]{
	\left.
	#1
	\right._{\raisebox{0.5\yscale}{$#2$}}
}
\DeclareMathOperator*{\argmax}{arg\,max}
\newcommand{\State}{{\mathcal{M}}} 
\newcommand{\m}{m} 
\newcommand{\In}{{\mathcal{I}}} 
\newcommand{\s}{i} 
\newcommand{\Out}{{\mathcal{O}}} 
\newcommand{\upd}{{\mu}} 
\newcommand{\expose}{{\omega}} 
\newcommand{\Hid}{{\mathcal{H}}} 
\newcommand{\Act}{{\mathcal{A}}} 
\newcommand{\Sen}{{\mathcal{S}}} 
\newcommand{\tran}{{\nu}}  
\newcommand{\rew}{{r}} 
\newcommand{\smm}{stochastic Moore machine\xspace}
\newcommand{\smms}{stochastic Moore machines\xspace}
\newcommand{\Ihid}{\Hid}  
\newcommand{\Iout}{\Act} 
\newcommand{\mdpState}{{\mathcal{X}}}  
\newcommand{\mdpAct}{{\mathcal{A}}} 
\newcommand{\btran}{{\beta}} 
\newcommand{\brew}{{\rho}} 
\begin{document}
%
\title{Interpreting systems as solving POMDPs: a step towards a formal understanding of agency\thanks{
This project was made possible through the support of Grant 62229 from the John Templeton Foundation. The opinions expressed in this publication are those of the authors and do not necessarily reflect the views of the John Templeton Foundation.
Work on this project was also supported by a grant from GoodAI.}}

\titlerunning{Interpreting systems as solving POMDPs}
%

\author{Martin Biehl\inst{1}\orcidID{0000-0002-1670-6855} \and Nathaniel Virgo\inst{2}\orcidID{0000-0001-8598-590X}}
%
\authorrunning{M.\ Biehl \and N.\ Virgo}
%

\institute{Cross Labs, Cross Compass, Tokyo 104-0045, Japan\\
\email{martin.biehl@cross-compass.com}\\
\and Earth-Life Science Institute, Tokyo Institute of Technology, Tokyo 152-8550, Japan}
%
%

\maketitle              
\begin{abstract}

Under what circumstances can a system be said to have beliefs and goals, and how do such agency-related features relate to its physical state? Recent work has proposed a notion of \emph{interpretation map}, a function that maps the state of a system to a probability distribution representing its beliefs about an external world. Such a map is not completely arbitrary, as the beliefs it attributes to the system must evolve over time in a manner that is consistent with Bayes' theorem, and consequently the dynamics of a system constrain its possible interpretations. Here we build on this approach, proposing a notion of interpretation not just in terms of beliefs but in terms of goals {and actions}. To do this we make use of the existing theory of partially observable Markov decision processes (POMDPs): 
we say that a system can be interpreted as a solution to a POMDP if it not only admits an interpretation map describing its {beliefs about} the hidden state of a POMDP but also takes actions that are optimal according to its {belief state}. 
An agent is then a system together with an interpretation of this system as a POMDP solution.
Although POMDPs are not the only possible formulation of what it means to have a goal, this nevertheless represents a step towards a more general {formal} definition of what it means for a system to be an agent.



\keywords{Agency \and POMDP \and Bayesian filtering \and Bayesian inference }
\end{abstract}
\section{Introduction}
This work is a contribution to the general question of when a physical system can justifiably be seen as an agent. 
We are still far from answering this question in full generality but employ here a set of limiting assumptions / conceptual commitments that allow us to provide an \emph{example} of the kind of answer we are looking for. 

The basic idea is inspired by but different from Dennett's proposal to use so-called stances \cite{dennett_true_1981}, which says we should interpret a system as an agent if taking the \emph{intentional stance} improves our predictions of its behavior beyond those obtained by the \emph{physical stance} (or the design stance, but we ignore this stance here).
Taking the physical stance means using the dynamical laws of the (microscopic) physical constituents of the system.
Taking the intentional stance means ignoring the dynamics of the physical constituents of the system and instead interpreting it as a rational agent with beliefs and desires. (We content ourselves with only ascribing \emph{goals} instead of desires.)
A quantitative method to perform this comparison of stances can be found in \cite{orseau_agents_2018}.

In contrast to using a comparison of prediction performance of different stances we propose to decide whether a system can be interpreted as an agent by checking whether 
its physical dynamics
are  \emph{consistent} with an interpretation as a rational agent with beliefs and goals.
%
%
%
In other words, assuming that we know what happens in the system on the physical level (admittedly a strong assumption), we propose to check whether we can consistently ascribe meaning to its
physical
states, such that they appear to implement a process of belief updating and decision making.

A formal example definition of what it means for an interpretation to be consistent was recently published in \cite{virgo_interpreting_2021}. This establishes a notion of consistent interpretation as a Bayesian \emph{reasoner}, meaning something that receives inputs and uses them to make inferences about some hidden variable, but does not take actions or pursue a goal. 

Briefly, such an interpretation consists of a map from the physical / internal states of the system to Bayesian beliefs about hidden states 
(that is, probability distributions over them), as well as a model describing how the hidden states determine the next hidden state and the input to the system. 
To be consistent, if the internal state at time $t$ is mapped to some belief, then the internal state at time $t+1$ must map to the Bayesian posterior of that belief, given the input that was received in between the two time steps. 

In other words, the internal state parameterizes beliefs and the system updates the parameters in a way that makes the parameterized belief change according to Bayes law. 
A Bayesian reasoner is not an agent however. It lacks both goals and rationality since it neither has a goal nor actions that it could rationally take to bring the goal about. 

Here we build on the notion of consistent interpretations of \cite{virgo_interpreting_2021} and show how it can be extended to also include the attribution of goals and rationality.

For this we employ the class of problems called partially observable Markov decision processes (POMDPs), which are well suited to our purpose. These provide hidden states to parameterize beliefs over, a notion of a goal, and a notion of what it means to act optimally, and thus rationally, with respect to this goal. Note that both the hidden states and the goal (which will be represented by rewards) are not assumed to have a physical realization. They are part of the interpretation and therefore only need to exist in the mathematical sense. Informally, the hidden state is assumed by the agent to exist, but need not match a state of the true external world.

We will see that given a pair of a physical system (as modelled by a \smm) and a POMDP it can in principle  be checked whether the system does indeed parameterize beliefs over the hidden states and act optimally with respect to the goal and its beliefs (\cref{def:pomdpinterpretation}). We then say the system can be interpreted as solving the POMDP, and we 
propose to 
call the pair of system and POMDP an agent.
%
%
%
%
%
%
This constitutes an example of a formal definition of a \emph{rational agent with beliefs and goals}. 

To get there however we need to make some conceptual commitments / assumptions that restrict the scope of our definition. Note that we do not make these commitments because we believe they are particularly realistic or useful for the description of real world agents like living organisms, but only because they make it possible to be relatively precise. We suspect that each of these choices has alternatives that lead to other notions of agents. Furthermore, we do not argue that all agents are rational, nor that they all have beliefs and goals. These are properties of the particular notion of agent we define here, but there are certainly other notions of agent that one might want to consider.

The first commitment is with respect to the notion of system.  
Generally, the question of which physical systems are agents may require us to clarify how we obtain a candidate physical system from a causally closed universe and what the type of the resulting candidate physical system is. This can be done by defining what it means to be an individual and / or identifying some kind of boundary. Steps in this direction have been made in the context of cellular automata e.g.\ by \cite{beer_cognitive_2014,biehl_towards_2016} and in the context of stochastic differential equations by \cite{friston_life_2013,friston_free_2022}. 

We here restrict our scope by assuming that the candidate physical system is a stochastic Moore machine (\cref{def:smm}). A stochastic Moore machine has inputs, a dynamic and possibly stochastic internal state, and outputs that deterministically depend on the internal state only. This is far from the most general types of system that could be considered, but it is general enough to represent the digital computers controlling most artificial agents at present. It it also similar to a time and space discretized version of the dynamics of the internal state of the literature on the free energy principle (FEP) \cite{friston_free_2022}. 

Already at this point the reader may expect that the inputs of the Moore machine will play the role of sensor values and the outputs that of actions and this will indeed be the case. Furthermore, the role of the ``physical constituents'' or
physical state (of Dennett's physical stance) will be played by the internal state of the machine and this state will be equipped with a kind of consistent Bayesian interpretation. In other words, it will be parameterizing/determining probabilistic beliefs. This is similar to the role of internal states in the FEP.

For our formal notion of beliefs we commit to probability distributions that are updated in accordance with Bayes law.

The third commitment is with respect to a formal notion of goals and rationality. As already mentioned, for those we employ POMDPs. These provide both a formal notion of goals via expected reward maximization and a formal notion of rational behavior via their optimal policy. 
%
%

Combining these commitments we want to express when exactly a system can be interpreted as a rational agent with beliefs and goals. 


Rational agents take the optimal actions with respect to their goals and beliefs. The convenient feature of POMDPs for our purposes is that the optimal policies are usually expressed as functions of probabilistic beliefs about the hidden state of the POMDP. For this to work, the probabilistic beliefs must be updated correctly according to Bayesian principles. 
%
%
%
It then turns out that these standard solutions for POMDPs can be turned into \smms whose states are the (correctly updated) probabilistic beliefs themselves and whose outputs are the optimal actions.

This has two consequences. 
One is that it seems justified to interpret such \smms as rational agents that have beliefs and goals. 
Another is that there are \smms that solve POMDPs.
Accordingly, our definition of \smms that solve POMDPs (\cref{def:pomdpinterpretation}) applies to these machines. 

In addition to such machines, however, we want to include machines whose states only \emph{parameterize} (and are not equal to) the probabilistic beliefs over hidden states and who output optimal actions.\footnote{These machines are probably equivalent to the sufficient information state processes in \cite[definition 2]{hauskrecht_value-function_2000} but establishing this is beyond the scope of this work.}
We achieve this by employing an adapted notion of a consistent interpretation (\cref{def:influencedfiltering}).
A \smm can then be interpreted as solving a POMDP if it has this kind of consistent interpretation with respect to the hidden state dynamics of the POMDP and outputs the optimal policy.

We also show that the machines obeying our definition are optimal in the same sense as the machines whose states are the correctly updated beliefs, so we find it justified to interpret those machines as rational agents with beliefs and goals as well.

Before we go on to the technical part we want to highlight a few more aspects. 
The first is that the existence of a consistent interpretation (either in terms of filtering or in terms of agents) only depends on the stochastic Moore machine that's being interpreted, and not on any properties of its environment.
This is because a consistent interpretation requires an agent's beliefs and goals to be \emph{consistent}, and this is different from asking whether they are \emph{correct}.
An agent may have the wrong model, in that it doesn't correspond correctly to the true environment.
Its conclusions in this case will be wrong, but its reasoning can still be consistent; see \cite{virgo_interpreting_2021} for further discussion of this point.
In the case of POMDP interpretations this means that the agent's actions only need to be optimal according to its model of the environment, but they might be suboptimal according to the true environment.

This differs from the perspective taken in the original FEP literature  concerned with the question of when a system of stochastic differential equations contain an agent performing approximate Bayesian inference \cite{friston_life_2013,friston_free_2019,parr_markov_2020,da_costa_bayesian_2021,friston_free_2022}.\footnote{The FEP literature includes both publications on how to construct agents that solve problems (e.g.\ \cite{friston_active_2015}) and publications on when a system of stochastic differential equations contain an agent performing approximate Bayesian inference. Only the latter literature addresses a question comparable to the one addressed in the present manuscript.} This literature also interprets a system as modelling hidden state dynamics, but there the model is derived from the dynamics of the actual environment (the so called ``external states''), and hence cannot differ from it.
%
We consider it helpful to be able to make a clear distinction between the agent's model of its environment and its true environment.  
The case where the model is derived from the true environment is an interesting special case of this, but our framework covers the general case as well. 
To our knowledge, the possibility of choosing the model independently from the actual environment in a FEP-like theory was first proposed in \cite{virgo_interpreting_2021}, and has since also appeared in a setting closer to the original FEP one \cite{parr_inferential_2022}.

We will see here (\cref{def:influencedfiltering}) that the independence of model from actual environment extends to actions in some sense. Even a machine without any outputs can have a consistent interpretation modelling an influence 
of the internal state on the hidden state dynamics even though it can't have an influence on the actual environment. Such ``actions'' remain confined to the interpretation.

Another aspect of using consistent interpretations of the internal state and thus the analogue of the physical state / the physical constituents of the system is that it automatically comes with a notion of coarse-graining of the internal state. Since interpretations map the internal state to beliefs but don't need to do so injectively they can include coarse-graining of the state. 

Also note, all our current notions of interpretation in terms of Bayesian beliefs require exact Bayesian updating. This means approximate versions of Bayesian inference or filtering are outside of the scope. This limits the scope of our example definition in comparison with the FEP which, as mentioned, also uses beliefs parameterized by internal states but considers approximate inference. On the other hand this keeps the involved concepts simpler.

Finally, we want to mention that \cite{kenton_discovering_2022} recently proposed an agent discovery algorithm. This algorithm is based on a definition of agents that takes into account the creation process of the system. 
An agent discovery algorithm based on the approach presented here would take as input a machine (\cref{def:machine}) or a stochastic Moore machine (\cref{def:smm}) and try to find a POMDP interpretation (\cref{def:pomdpinterpretation}). The creation process of the machine (system) would not be taken into account. 
This is one distinction between our notion of an agent and that of \cite{kenton_discovering_2022}. A more detailed comparison would be interesting but is beyond the scope of this work. 

The rest of this manuscript presents the necessary formal definitions that allow us to precisely state our example of an agent definition.

\section{Interpreting \smms}

Throughout the manuscript we write $P\mathcal{X}$ for the set of all finitely supported probability distributions over a set $\mathcal{X}$. This ensures that all probability distributions we consider only have a finite set of outcomes that occur with non-zero probability. We can then avoid measure theoretic language and technicalities. For two sets $\mathcal{X},\mathcal{Y}$ a Markov kernel is a function $\zeta:\mathcal{X} \to P\mathcal{Y}$. We write $\zeta(y|x)$ for the probability of $y \in \mathcal{Y}$ according to the probability distribution $\zeta(x) \in P\mathcal{Y}$. If we have a function $f:\mathcal{X}\to \mathcal{Y}$ we sometimes write $\delta_f:\mathcal{X} \to P\mathcal{Y}$ for the Markov kernel with $\delta_{f(x)}(y)$ (which is $1$ if $y=f(x)$ and $0$ else) then defining the probability of $y$ given $x$. 
%

We give the following definition, which is the same as the one used in \cite{virgo_interpreting_2021}, but specialised to the case where update functions map to the set of finitely supported probability distributions and not to the space of all probability distributions.  
\begin{definition}
	\label{def:machine}
A \emph{machine} is a tuple $(\State,\In,\upd)$ consisting of 
    a set $\State$ called \emph{internal state space}; 
    a set $\In$ called \emph{input space};
    and a Markov kernel $\upd:\In \times \State \to P\State$ called \emph{machine kernel}, 
    taking an input $\s \in \In$ and a current machine state $\m \in \State$ to a probability distribution $\upd(\s,\m) \in P\State$ over machine states. 
\end{definition}
The idea is that at any given time the machine has a state $m\in \State$. At each time step it recieves an input $\s \in \In$, and updates stochastically to a new state, according to a probability distirbution specified by the machine kernel.
If we add a function that specifies an output given the machine state we get the definition of a \smm.
\begin{definition}
\label{def:smm}
A \emph{\smm} is a tuple $(\State,\In,\Out,\upd,\expose)$ consisting of
    a machine with internal state space $\State$, input space $\In$, and machine kernel $\upd:\In \times \State \to P\State$; 
    a set $\Out$ called the \emph{output space};
    and a function $\expose:\State \to \Out$ called  expose function taking any machine state $m \in \State$ to an output $\expose(m) \in \Out$. 
\end{definition}

Note that the expose function is an ordinary function and not stochastic. 


We need to adapt the definition of a consistent Bayesian filtering interpretation \cite[Definition 2]{virgo_interpreting_2021}.
For our purposes here we need to include models of dynamic hidden states that can be influenced. 
In particular we need to interpret a machine as modelling the dynamics of a hidden state that the machine itself can influence. 
This suggests that the interpretation includes a model of how the state of the machine influences the hidden state. We here call such influences ``actions'' and the function that takes states to actions \emph{action kernel}. 
\begin{definition}
\label{def:influencedfiltering}
Given a machine with state space $\State$, input space $\In$ and machine kernel $\upd:\In \times \State \to P\State$, a \emph{consistent Bayesian influenced filtering interpretation}  $(\Hid,\Act,\psi,\alpha,\kappa)$ consists of 
    a set $\Hid$ called the \emph{hidden state space}; 
    a set $\Act$ called the \emph{action space};
    a Markov kernel $\psi:\State \to P\Hid$ called \emph{interpretation map} mapping machine states to probability distributions over the hidden state space;
    a function $\alpha:\State \to \Act$ called \emph{action function} mapping machine states to actions\footnote{We choose actions to be deterministic functions of the machine state because the \smms considered here also have deterministic outputs. Other choices may be more suitable in other cases.}; 
    and a Markov kernel $\kappa:\Hid \times \Act \to P(\Hid \times \In)$ called the \emph{model kernel} mapping pairs $(h,a)$ of hidden states and actions to probability distributions $\kappa(h,a)$ over pairs $(h',i)$ of next hidden states and an input. 

These components have to obey the following equation. First, in string diagram notation (see appendix A of \cite{virgo_interpreting_2021} for an introduction to string diagrams for probability in a similar context to the current paper):
\begin{equation}
%
%
\stringdiagram {
 \wire (-1,0) -- (12.5,0);
 \ulabel{-1,0}{\State};
 \ulabel{12.5,0}{\State};
 \blackdot{0,0};
 \wire[1.5] (0,0) -- (0,5) -- (1.75,5);
 \stoch{1.75,5}{$\psi$}{3};
 \blackdot{0.5,0};
 \wire[1] (0.5,0) -- (0.5,2) -- (1.75,2);
 \sqbox{1.75,2}{$\alpha$}{3};
 \ulabel{12.5,2}{\In};
 \wire (1.75,5) -- (6,5);
 \wire (1.75,2) -- (6,2);
 \ulabel{3.2,5}{\Ihid};
 \ulabel{3.2,2}{\Iout};
 \stoch{6,3.5}{$\kappa$}{8};
 \wire (6,5) -- (12.5,5);
 \wire (6,2) -- (12.5,2);
 \stoch{10.5,0.25}{$\upd$}{3};
 \blackdot{8.5,2};
 \ulabel{9,2}{\In};
 \wire[1] (8.5,2) -- (8.5,0.5) -- (10.5,0.5);
 \ulabel{12.5,5}{\Ihid};
}
\,\, = 
\pbox{
\stringdiagram {
 \wire (-1,0) -- (14.5,0);
 \ulabel{-1,0}{\State};
 \ulabel{14.5,0}{\State};
 \blackdot{0,0};
 \wire[1.5] (0,0) -- (0,5) -- (1.75,5);
 \stoch{1.75,5}{$\psi$}{3};
 \blackdot{0.5,0};
 \wire[1] (0.5,0) -- (0.5,2) -- (1.75,2);
 \sqbox{1.75,2}{$\alpha$}{3};
 \ulabel{14.5,2}{\In};
 \wire (1.75,5) -- (6,5);
 \wire (1.75,2) -- (6,2);
 \ulabel{3.2,5}{\Ihid};
 \ulabel{3.2,2}{\Iout};
 \stoch{6,3.5}{$\kappa$}{8};
 \wire (6,5) -- (8.5,5);
 \wire (6,2) -- (14.5,2);
 \blackdot{8.5,5};
 \blackdot{8.5,2};
 \ulabel{9,2}{\In};
 \wire[1] (8.5,2) -- (8.5,0.5) -- (10.5,0.5);
 \stoch{10.5,0.25}{$\upd$}{3};
 \blackdot{11.75,0};
 \wire[1] (11.75,0) -- (11.75,5) -- (13,5);
 \stoch{13,5}{$\psi$}{3};
 \wire (13,5) -- (14.5,5);
 \ulabel{14.5,5}{\Ihid};
}
}{,}
\end{equation}
Second, in more standard notation, we must have for each $m \in \State$, $h' \in \Ihid$, $i \in \In$, and $m' \in \State$:
\begin{align}
\label{eq:influencedtradconsistency}
    \begin{split}
        \left(\sum_{h \in \Ihid}\right.&\left.\sum_{a \in \Iout}\kappa(h',i|h,a) \psi(h|m)\delta_{\alpha(m)}(a) \right) \upd(m'|i,m)=\\
        &\psi(h'|m') \left(\sum_{h \in \Ihid}\sum_{a \in \Iout} \sum_{ h'' \in \Ihid}\kappa( h'',i|h,a) \psi(h|m)\delta_{\alpha(m)}(a)\right) \upd(m'|i,m).
    \end{split} 
\end{align}
\end{definition}
In \cref{app:familiarconsistency} we show how to turn \cref{eq:influencedtradconsistency} into a more familiar form.



Note that we defined consistent Bayesian influenced filtering interpretations for machines that have no actual output but that it also applies to those with outputs. If we want an interpretation of a machine with outputs we may choose the action space as the output space and the action kernel as the output kernel, but we don't have to. Interpretations can still be consistent.    

Also note that when $\Iout$ is a space with only one element we recover the original definition of a consistent Bayesian filtering interpretation from \cite{virgo_interpreting_2021}.

\section{Interpreting \smms as solving POMDPs}

\begin{definition}
\label{def:pomdp}
A \emph{partially observable Markov decision process} (POMDP) can be defined as a tuple $(\Hid,\Act,\Sen,\kappa,\rew)$ consisting of 
    a set $\Hid$ called the \emph{hidden state space}; 
    a set $\Act$ called the \emph{action space};
    a set $\Sen$ called the \emph{sensor space};
    a Markov kernel $\kappa:\Hid \times \Act \to P(\Hid\times \Sen)$ called the transition kernel taking a hidden state $h$ and action $a$ to a probability distribution over next hidden states and sensor values;
    and a function $\rew:\Hid \times \Act \to \mathbb{R}$ called the \emph{reward function} returning a real valued reward depending on the hidden state and an action.
\end{definition}

To solve a POMDP we have to choose a policy (as defined below) that maximizes the expected cumulative reward either for a finite horizon or discounted with an infinite horizon. We only deal with the latter case here.

POMDPs are commonly solved in two steps. First since the hidden state is unknown, probability distributions $b \in P\Hid$ (called belief states) over the hidden state are introduced and an updating function $f:P\Hid \times \Act \times \Sen \to P\Hid$ for these belief states is defined. This updating is directly derived from Bayes rule \cite{kaelbling_planning_1998}:
    \begin{align}
    \label{eq:beliefupdate}
        b'(h')=f(b,a,s)(h')\coloneqq&Pr(h'|b,a,s)
        \coloneqq&\frac{\sum_{h \in \Hid}\kappa(h',s|h,a)b(h)}{\sum_{\bar h,\bar h' \in \Hid} \kappa(\bar h',s|\bar h,a)b(\bar h)}.
    \end{align}
(Note that an assumption is that the denominator is greater than zero.)
Then an optimal policy $\pi^*:P\Hid \to \Act$ mapping those belief states to actions is derived from a so-called \emph{belief state MDP} (see \cref{app:beliefmdp} for details). 
The optimal policy can be expressed using an optimal value function
$V^*:P\Hid \to \mathbb{R}$ that solves the following \emph{Bellman equation} \cite{hauskrecht_value-function_2000}:
\begin{align}
\label{eq:valuefun}
    V^*(b)=&\max_{a \in \mdpAct} \left(\sum_{h \in \Hid} b(h) r(h,a) + \gamma \hspace{-3pt}\sum_{\substack{s \in \Sen\\ h,h' \in \Hid}} \hspace{-3pt} \kappa(h',s|h,a)b(h) V^*(f(b,a,s)) \right).
\end{align}
The optimal policy is then \cite{hauskrecht_value-function_2000}:
\begin{align}
\label{eq:optimalpolicy}
\pi^*(b)=&\argmax_{a \in \mdpAct} \left(\sum_{h \in \Hid} b(h) r(h,a) + \gamma \hspace{-3pt}\sum_{\substack{s \in \Sen\\ h,h' \in \Hid}} \hspace{-3pt}\kappa(h',s|h,a)b(h) V^*(f(b,a,s)) \right).
\end{align}
Note that the belief state update function $f$ determines optimal value function and policy. 

Define now $f_{\pi^*}(b,s)\coloneqq f(b,\pi^*(b),s)$.
Then note that if we consider $P\Hid$ a state space, $\Sen$ an input space, $\Act$ an output space, $\delta_{f_{\pi^*}}:P\Hid \times \Sen \to PP\Hid$  a machine kernel, and $\pi^*:P\Hid \to \Act$ an expose kernel, we get a stochastic Moore machine.\footnote{If the denominator in \cref{eq:beliefupdate} is zero for some value $s \in \Sen$ then define e.g.\ $f_{\pi^*}(b,s)=b$.}

This machine solves the POMDP and can be directly interpreted as a rational agent with beliefs and a goal. The beliefs are just the belief states themselves, the goal is expected cumulative reward maximization, and the optimal policy ensures it acts rationally with respect to the goal.

Our definition of interpretations of \smms as solutions to POMDPs includes this example and extends it to machines whose states aren't probability distributions / belief states directly but instead are parameters of such belief states that get (possibly stochastically) updated consistently.  

We now state this main definition and then a proposition that ensures that our definition only applies to \smms that parameterize beliefs correctly as required by \cref{eq:beliefupdate}. This ensures that the optimal policy obtained via \cref{eq:optimalpolicy} is also the optimal policy for the states of the machine.

\begin{definition}
\label{def:pomdpinterpretation}
Given a \smm $(\State,\In,\Out,\upd,\expose)$, a consistent interpretation as a solution to a POMDP  is given by a POMDP $(\Hid,\Out,\In,\kappa,\rew)$ and an \emph{interpretation map} $\psi:\State \to P\Hid$ 
such that 
    (i) $(\Hid,\Out,\psi,\expose,\kappa)$ is a consistent Bayesian influenced filtering interpretation of the machine part  $(\State,\In,\upd)$ of the \smm; %
%
    %
and $(ii)$ the machine expose function $\expose:\State \to \Out$ (which coincides with the action function in the interpretation) maps any machine state $m$ 
    to the action $\pi^*(\psi(m))$ specified by the optimal POMDP policy for the belief $\psi(m)$ associated to machine state $m$ by the interpretation.
    Formally: 
    \begin{align}
    \expose(m)=& \pi^*(\psi(m)).
    \end{align}
\end{definition}
Note that the machine never gets to observe the rewards of the POMDP we use to interpret it.
An example of a \smm together with an interpretation of it as a solution to a POMDP is given in \cref{app:sondikexample}.


\begin{proposition}
\label{prop:update}
Consider a \smm $(\State,\In,\Out,\upd,\expose)$, together with a consistent interpretation as a solution to a POMDP, given by the POMDP $(\Hid,\Out,\In,\kappa,\rew)$ and Markov kernel $\psi:\State \to P\Hid$.
Suppose it is given an input $i\in \In$, and that this input has a positive probability according to the interpretation. (That is, \cref{eq:subjectivelypossible} is obeyed.)
Then the parameterized distributions $\psi(m)$ update as required by the belief state update equation (\cref{eq:beliefupdate}) whenever $a=\pi^*(b)$ i.e.\ whenever the action is equal to the optimal action. More formally, for any $m,m' \in \State$ with $\upd(m'|i,m)>0$ and $i \in \In$ that can occur according to the POMDP transition and sensor kernels, we have for all $h' \in \Hid$
\begin{align}
    \psi(h'|m') = f(\psi(m),\pi^*(\psi(m)),i)(h').
\end{align}
\end{proposition}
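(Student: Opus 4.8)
The plan is to read the claim off directly from the consistency axiom supplied by part~(i) of the interpretation. By \cref{def:pomdpinterpretation}(i), $(\Hid,\Out,\psi,\expose,\kappa)$ is a consistent Bayesian influenced filtering interpretation of $(\State,\In,\upd)$, so \cref{eq:influencedtradconsistency} holds with the action function $\alpha$ taken to be $\expose$; and by part~(ii) we have $\expose(m)=\pi^*(\psi(m))$. Writing $b\coloneqq\psi(m)$ and $a^*\coloneqq\pi^*(b)$, the factor $\delta_{\alpha(m)}(a)=\delta_{a^*}(a)$ is supported on $\{a^*\}$, so both $a$-sums in \cref{eq:influencedtradconsistency} collapse and the equation becomes
\[
\Bigl(\sum_{h\in\Hid}\kappa(h',i|h,a^*)\,b(h)\Bigr)\upd(m'|i,m)
=\psi(h'|m')\Bigl(\sum_{h,h''\in\Hid}\kappa(h'',i|h,a^*)\,b(h)\Bigr)\upd(m'|i,m).
\]

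Next I would cancel the common factor $\upd(m'|i,m)$ from both sides, which is legitimate since the proposition assumes $\upd(m'|i,m)>0$, leaving
\[
\sum_{h\in\Hid}\kappa(h',i|h,a^*)\,b(h)=\psi(h'|m')\sum_{h,h''\in\Hid}\kappa(h'',i|h,a^*)\,b(h).
\]
The last step is to divide through by $D\coloneqq\sum_{h,h''\in\Hid}\kappa(h'',i|h,a^*)\,b(h)$. Up to relabelling the dummy index, $D$ is exactly the denominator appearing in the belief-state update rule \cref{eq:beliefupdate} evaluated at $(b,a^*,i)$, and the standing hypothesis that the received input $i$ is subjectively possible (\cref{eq:subjectivelypossible}) is precisely the statement $D>0$; hence $f(b,a^*,i)$ is well defined and the division is valid. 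One obtains
\[
\psi(h'|m')=\frac{\sum_{h\in\Hid}\kappa(h',i|h,a^*)\,b(h)}{D}=f(b,a^*,i)(h')=f\bigl(\psi(m),\pi^*(\psi(m)),i\bigr)(h'),
\]
which is the assertion of \cref{prop:update}.

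Since the proposition is essentially a specialisation of \cref{def:influencedfiltering} using \cref{def:pomdpinterpretation}(ii), I do not expect a genuine obstacle; the only points requiring care are bookkeeping ones. I must check that the two positivity hypotheses do exactly the two jobs required — $\upd(m'|i,m)>0$ licenses the cancellation of that factor, while \cref{eq:subjectivelypossible} both licenses the division by $D$ and guarantees that $f(b,a^*,i)$ is defined in the sense of \cref{eq:beliefupdate} — and I must confirm that the normalising sum produced by collapsing the consistency equation agrees term-for-term with the denominator of \cref{eq:beliefupdate} after the index relabelling. The clause in the statement about inputs ``that can occur according to the POMDP transition and sensor kernels'' serves only to make \cref{eq:subjectivelypossible} non-vacuous and is not otherwise used in the derivation.
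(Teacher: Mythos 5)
Your proposal is correct and follows essentially the same route as the paper's own proof: invoke the consistency equation from \cref{def:influencedfiltering}, use $\upd(m'|i,m)>0$ to cancel that factor, use the subjective-possibility condition \cref{eq:subjectivelypossible} to divide by the normaliser, collapse the $a$-sum via $\delta_{\expose(m)}$ together with $\expose(m)=\pi^*(\psi(m))$ from \cref{def:pomdpinterpretation}(ii), and match the result term-for-term with \cref{eq:beliefupdate}. The only difference is the order in which these elementary steps are performed, which is immaterial.
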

\begin{proof}
See \cref{app:updateproof}.
\end{proof}
With this we can see that if $V^*$ is the optimal value function for belief states $b \in P\Hid$ of \cref{eq:valuefun}, then $\bar V^*(m) \coloneqq V^*(\psi(m))$ is an optimal value function on the machine's state space with optimal policy $\expose(m)=\pi^*(\psi(m))$.
\section{Conclusion}
We proposed a definition of when an \smm can be interpreted as solving a partially observable Markov decision process (POMDP). We showed that standard solutions of POMDPs have counterpart machines that this definition applies to. Our definition employs a newly adapted version of a consistent interpretation. We showed that with this our definition includes additional machines whose state spaces are parameters of probabilistic beliefs and not such beliefs directly. We suspect these machines are closely related to information state processes \cite{hauskrecht_value-function_2000} but the precise relation is not yet known to us.

\bibliographystyle{splncs04}
\bibliography{./bibliography}

\begin{thebibliography}{10}
\providecommand{\url}[1]{\texttt{#1}}
\providecommand{\urlprefix}{URL }
\providecommand{\doi}[1]{https://doi.org/#1}

\bibitem{beer_cognitive_2014}
Beer, R.D.: The cognitive domain of a glider in the game of life. Artificial
  Life  \textbf{20}(2),  183--206 (2014). \doi{10.1162/ARTL\_a\_00125}

\bibitem{biehl_towards_2016}
Biehl, M., Ikegami, T., Polani, D.: Towards information based spatiotemporal
  patterns as a foundation for agent representation in dynamical systems. In:
  Proceedings of the {Artificial} {Life} {Conference} 2016. pp. 722--729. The
  MIT Press (Jul 2016). \doi{10.7551/978-0-262-33936-0-ch115},
  \url{https://mitpress.mit.edu/sites/default/files/titles/content/conf/alife16/ch115.html}

\bibitem{da_costa_bayesian_2021}
Da~Costa, L., Friston, K., Heins, C., Pavliotis, G.A.: Bayesian {Mechanics} for
  {Stationary} {Processes}. arXiv:2106.13830 [math-ph, physics:nlin, q-bio]
  (Jun 2021), \url{http://arxiv.org/abs/2106.13830}, arXiv: 2106.13830

\bibitem{dennett_true_1981}
Dennett, D.C.: True {Believers} : {The} {Intentional} {Strategy} and {Why} {It}
  {Works}. In: Heath, A.F. (ed.) Scientific {Explanation}: {Papers} {Based} on
  {Herbert} {Spencer} {Lectures} {Given} in the {University} of {Oxford}, pp.
  53--75. Clarendon Press (1981)

\bibitem{friston_life_2013}
Friston, K.: Life as we know it. Journal of The Royal Society Interface
  \textbf{10}(86) (Sep 2013). \doi{10.1098/rsif.2013.0475},
  \url{http://rsif.royalsocietypublishing.org/content/10/86/20130475}

\bibitem{friston_free_2019}
Friston, K.: A free energy principle for a particular physics. arXiv:1906.10184
  [q-bio]  (Jun 2019), \url{http://arxiv.org/abs/1906.10184}, arXiv: 1906.10184

\bibitem{friston_free_2022}
Friston, K., Da~Costa, L., Sajid, N., Heins, C., Ueltzhöffer, K., Pavliotis,
  G.A., Parr, T.: The free energy principle made simpler but not too simple
  (Jan 2022). \doi{10.48550/arXiv.2201.06387},
  \url{http://arxiv.org/abs/2201.06387}, number: arXiv:2201.06387
  arXiv:2201.06387 [cond-mat, physics:nlin, physics:physics, q-bio]

\bibitem{friston_active_2015}
Friston, K., Rigoli, F., Ognibene, D., Mathys, C., Fitzgerald, T., Pezzulo, G.:
  Active inference and epistemic value. Cognitive Neuroscience  \textbf{6}(4),
  187--214 (2015). \doi{10.1080/17588928.2015.1020053}

\bibitem{hauskrecht_value-function_2000}
Hauskrecht, M.: Value-{Function} {Approximations} for {Partially} {Observable}
  {Markov} {Decision} {Processes}. Journal of Artificial Intelligence Research
  \textbf{13},  33--94 (Aug 2000). \doi{10.1613/jair.678},
  \url{http://arxiv.org/abs/1106.0234}, arXiv:1106.0234 [cs]

\bibitem{kaelbling_planning_1998}
Kaelbling, L.P., Littman, M.L., Cassandra, A.R.: Planning and acting in
  partially observable stochastic domains. Artificial Intelligence
  \textbf{101}(1–2),  99--134 (May 1998).
  \doi{10.1016/S0004-3702(98)00023-X},
  \url{http://www.sciencedirect.com/science/article/pii/S000437029800023X}

\bibitem{kenton_discovering_2022}
Kenton, Z., Kumar, R., Farquhar, S., Richens, J., MacDermott, M., Everitt, T.:
  Discovering {Agents} (Aug 2022). \doi{10.48550/arXiv.2208.08345},
  \url{http://arxiv.org/abs/2208.08345}, arXiv:2208.08345 [cs]

\bibitem{orseau_agents_2018}
Orseau, L., McGill, S.M., Legg, S.: Agents and {Devices}: {A} {Relative}
  {Definition} of {Agency}. arXiv:1805.12387 [cs, stat]  (May 2018),
  \url{http://arxiv.org/abs/1805.12387}, arXiv: 1805.12387

\bibitem{parr_inferential_2022}
Parr, T.: Inferential dynamics: {Comment} on: {How} particular is the physics
  of the free energy principle? by {Aguilera} et al. Physics of Life Reviews
  \textbf{42}, ~1--3 (Sep 2022). \doi{10.1016/j.plrev.2022.05.006},
  \url{https://www.sciencedirect.com/science/article/pii/S1571064522000276}

\bibitem{parr_markov_2020}
Parr, T., Da~Costa, L., Friston, K.: Markov blankets, information geometry and
  stochastic thermodynamics. Philosophical Transactions of the Royal Society A:
  Mathematical, Physical and Engineering Sciences  \textbf{378}(2164),
  20190159 (Feb 2020). \doi{10.1098/rsta.2019.0159},
  \url{https://royalsocietypublishing.org/doi/full/10.1098/rsta.2019.0159}

\bibitem{sondik_optimal_1978}
Sondik, E.J.: The {Optimal} {Control} of {Partially} {Observable} {Markov}
  {Processes} {Over} the {Infinite} {Horizon}: {Discounted} {Costs}. Operations
  Research  \textbf{26}(2),  282--304 (Mar 1978),
  \url{http://www.jstor.org/stable/169635}

\bibitem{virgo_interpreting_2021}
Virgo, N., Biehl, M., McGregor, S.: Interpreting {Dynamical} {Systems} as
  {Bayesian} {Reasoners}. arXiv:2112.13523 [cs, q-bio]  (Dec 2021),
  \url{http://arxiv.org/abs/2112.13523}, arXiv: 2112.13523

\end{thebibliography}

\appendix 

\section{Consistency in more familiar form}
\label{app:familiarconsistency}
One way to turn \cref{eq:influencedtradconsistency} into a probably more familiar form is to introduce some abbreviations and look at some special cases. We follow a similar strategy to~\cite{virgo_interpreting_2021}. Let
\begin{align}
    \psi_{\Ihid,\In}(h',i|m)\coloneqq\sum_{h \in \Ihid}\sum_{a \in \Iout}\kappa(h',i|h,a) \psi(h|m)\delta_{\alpha(m)}(a)
\end{align}
and 
\begin{align}
    \psi_{\In}(i|m)\coloneqq\sum_{h' \in \Ihid}\psi_{\Ihid,\In}(h',i|m).
\end{align}
Then consider the case of a deterministic machine and choose the $m' \in \State$ that actually occurs for a given input $i \in \In$ such that $\upd(m'|i,m)=1$ or abusing notation $m' = m'(i,m)$. Then we get from \cref{eq:influencedtradconsistency}:
\begin{align}
\label{eq:possiblenextstateconsistency}
            \psi_{\Ihid,\In}(h',i|m)=\psi(h'|\upd(i,m)) \psi_{\In}(i|m).
\end{align}
If we then also consider an input $i \in \In$ that is \emph{subjectively possible} as defined in \cite{virgo_interpreting_2021} which here means that $\psi_{\In}(i|m)>0$ we get 
\begin{align}
    \psi(h'|m'(i,m))= \frac{\psi_{\Ihid,\In}(h',i|m)}{\psi_{\In}(i|m)}.
\end{align}
This makes it more apparent that in the interpretation the updated machine state $m'=m'(i,m)$ parameterizes a belief $\psi(h'|m'(i,m))$ which is equal to the posterior distribution over the hidden state given input $i$. 
In the non-deterministic case, note that when $\upd(m'|i,m)=0$ the consistency equation imposes no condition, which makes sense since that means the machine state $m'$ can never occur. When $\upd(m'|i,m)>0$ we can divide \cref{eq:influencedtradconsistency} by this to also get \cref{eq:possiblenextstateconsistency}. The subsequent argument for $m'=m'(i,m)$ then must hold not only for this one possible next state but instead for every $m'$ with $\upd(m'|i,m)$. So in this case (if $s$ is subjectively possible) any of the possible next states will parameterize a belief $\psi(h'|m')$ equal to the posterior. 

\section{Proof of \cref{prop:update}}
\label{app:updateproof}


For the readers's convenience we recall the proposition:
\begin{proposition}
	Consider a \smm $(\State,\In,\Out,\upd,\expose)$, together with a consistent interpretation as a solution to a POMDP, given by the POMDP $(\Hid,\Out,\In,\kappa,\rew)$ and Markov kernel $\psi:\State \to P\Hid$.
	Suppose it is given an input $i\in \In$, and that this input has a positive probability according to the interpretation. (That is, \cref{eq:subjectivelypossible} is obeyed.)
	Then the parameterized distributions $\psi(m)$ update as required by the belief state update equation (\cref{eq:beliefupdate}) whenever $a=\pi^*(b)$ i.e.\ whenever the action is equal to the optimal action. More formally, for any $m,m' \in \State$ with $\upd(m'|i,m)>0$ and $i \in \In$ that can occur according to the POMDP transition and sensor kernels, we have for all $h' \in \Hid$
	\begin{align}
		\psi(h'|m') = f(\psi(m),\pi^*(\psi(m)),i)(h').
	\end{align}
\end{proposition}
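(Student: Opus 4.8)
The plan is to unwind the consistency equation from \cref{def:influencedfiltering} and match it term-by-term with the belief update equation \cref{eq:beliefupdate}, using the optimal-action constraint (ii) of \cref{def:pomdpinterpretation} to identify the action appearing in the model kernel with $\pi^*(\psi(m))$. First I would recall the abbreviations from \cref{app:familiarconsistency}: write $\psi_{\Hid,\In}(h',i\mid m)\coloneqq\sum_{h}\sum_{a}\kappa(h',i\mid h,a)\,\psi(h\mid m)\,\delta_{\alpha(m)}(a)$ and $\psi_{\In}(i\mid m)\coloneqq\sum_{h'}\psi_{\Hid,\In}(h',i\mid m)$. Since the action function $\alpha$ coincides with the expose function $\expose$, and by (ii) we have $\expose(m)=\pi^*(\psi(m))$, the Dirac factor $\delta_{\alpha(m)}(a)$ collapses the sum over $a$ to the single term $a=\pi^*(\psi(m))$, so that $\psi_{\Hid,\In}(h',i\mid m)=\sum_{h}\kappa(h',i\mid h,\pi^*(\psi(m)))\,\psi(h\mid m)$ and similarly for $\psi_{\In}$.

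Next I would invoke the hypothesis $\upd(m'\mid i,m)>0$ to divide \cref{eq:influencedtradconsistency} through by $\upd(m'\mid i,m)$, obtaining (as in \cref{app:familiarconsistency}, \cref{eq:possiblenextstateconsistency})
\begin{equation*}
\psi_{\Hid,\In}(h',i\mid m)=\psi(h'\mid m')\,\psi_{\In}(i\mid m).
\end{equation*}
Then I would use the ``subjectively possible'' assumption, $\psi_{\In}(i\mid m)>0$ (this is what \cref{eq:subjectivelypossible} asserts; note the hypothesis that $i$ can occur according to the POMDP's own transition and sensor kernels is what guarantees positivity here, since the denominator in \cref{eq:beliefupdate} must be nonzero), to divide and conclude
\begin{equation*}
\psi(h'\mid m')=\frac{\psi_{\Hid,\In}(h',i\mid m)}{\psi_{\In}(i\mid m)}
=\frac{\sum_{h}\kappa(h',i\mid h,\pi^*(\psi(m)))\,\psi(h\mid m)}{\sum_{\bar h,\bar h'}\kappa(\bar h',i\mid \bar h,\pi^*(\psi(m)))\,\psi(\bar h\mid m)}.
\end{equation*}
The right-hand side is exactly $f(\psi(m),\pi^*(\psi(m)),i)(h')$ by the definition of $f$ in \cref{eq:beliefupdate} (with the POMDP's transition kernel $\tran$ in the role of $\kappa$ and input space $\In$ in the role of the sensor space $\Sen$), which is the claim.

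The main obstacle I anticipate is bookkeeping rather than conceptual: carefully checking that the denominator $\psi_{\In}(i\mid m)$ is genuinely positive under the stated hypotheses, and reconciling the two notational conventions for the POMDP kernel (called $\tran$/$\phi$ in \cref{def:pomdpinterpretation} but $\kappa$ in the model kernel of the filtering interpretation and in the proposition statement). One should verify that the identification of the interpretation's model kernel with the POMDP's transition kernel is exactly what \cref{def:pomdpinterpretation}(i) supplies, so that no extra compatibility condition is being smuggled in. Once the $\delta_{\alpha(m)}$ collapse and the two divisions are justified, the equality is immediate from comparing formulas.
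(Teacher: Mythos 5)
Your proposal is correct and follows essentially the same route as the paper's own proof: divide the consistency equation by $\upd(m'\mid i,m)>0$, use the subjective-possibility condition to normalize, collapse the sum over actions via the Dirac factor $\delta_{\expose(m)}$ together with $\expose(m)=\pi^*(\psi(m))$, and match the resulting quotient with the definition of $f$ in \cref{eq:beliefupdate}. The only difference is cosmetic (you route the computation through the abbreviations $\psi_{\Hid,\In}$ and $\psi_{\In}$ of \cref{app:familiarconsistency}, while the paper writes the sums out explicitly), and your remarks on the positivity hypothesis and the $\kappa$ versus $\tran,\phi$ notation correctly reflect what the paper assumes.
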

\begin{proof}
By assumption the machine part $(\State,\In,\upd)$ of the \smm has a consistent Bayesian influenced filtering interpretation $(\Hid,\Out,\psi,\expose,\kappa)$.

This means that the belief $\psi(m)$ parameterized by the \smm obeys \cref{eq:influencedtradconsistency}. This means that for every possible next state $m'$ (i.e.\ $\upd(m'|s,m)>0$) we have 
\begin{align}
\label{eq:influencedtradconsistencyprop}
    \begin{split}
        \sum_{h \in \Ihid}&\sum_{a \in \Iout}\kappa(h',i|h,a) \psi(h|m)\delta_{\omega(m)}(a) =\\
        &\psi(h'|m') \left(\sum_{h \in \Ihid}\sum_{a \in \Iout} \sum_{ h'' \in \Ihid}\kappa( h'',i|h,a) \psi(h|m)\delta_{\omega(m)}(a)\right)
    \end{split} 
\end{align}
and for every subjectively possible input, that is, for every input $i \in \In$ with
\begin{align}
\label{eq:subjectivelypossible}
    \sum_{h \in \Ihid}\sum_{a \in \Iout} \sum_{ h'' \in \Ihid}\kappa( h'',i|h,a) \psi(h|m)\delta_{\omega(m)}(a)
    >0
\end{align}
(see below for a note on why this assumption is reasonable) we will have:
\begin{align}
        \psi(h'|m') =&\frac{  \sum_{h \in \Ihid}\sum_{a \in \Iout}\kappa(h',i|h,a) \psi(h|m)\delta_{\omega(m)}(a) }{
         \sum_{h \in \Ihid}\sum_{a \in \Iout} \sum_{ h'' \in \Ihid}\kappa( h'',i|h,a) \psi(h|m)\delta_{\omega(m)}(a)}\\
         =&\frac{  \sum_{h \in \Ihid}\kappa(h',i|h,\omega(m)) \psi(h|m) }{
         \sum_{h \in \Ihid}\sum_{ h'' \in \Ihid}\kappa( h'',i|h,\omega(m)) \psi(h|m)}.
\end{align}
Now consider the update function for which the optimal policy is found \cref{eq:beliefupdate}:
    \begin{align}
        f(b,a,s)(h')
        \coloneqq&\frac{\sum_{h \in \Hid}\kappa(h',s|h,a)b(h)}{\sum_{\bar h,\bar h' \in \Hid} \kappa(\bar h',s|\bar h,a)b(\bar h)}
    \end{align}
and plug in the belief $b=\psi(m)$ parameterized by the machine state, the optimal action $\pi^*(\psi(m))$ specified for that belief by the optimal policy $\pi^*$, and the $s=i$:
    \begin{align}
        f(\psi(m),\pi^*(m),i)(h')
        \coloneqq&\frac{\sum_{h \in \Hid}\kappa(h',i|h,\pi^*(\psi(m)))\psi(m)(h)}{\sum_{\bar h,\bar h' \in \Hid} \kappa(\bar h',i|\bar h,\pi^*(\psi(m)))\psi(m)(\bar h)}.
    \end{align}
Also introduce $\kappa$ and write $\psi(h|m)$ for $\psi(m)(h)$ as usual
    \begin{align}
        f(\psi(m),\pi^*(m),i)(h')
        \coloneqq&\frac{\sum_{h \in \Hid}\kappa(h',i|h,\pi^*(\psi(m)))\psi(h|m)}{\sum_{\bar h,\bar h' \in \Hid} \kappa(\bar h',i|\bar h,\pi^*(\psi(m)))\psi(\bar h|m)}\\
        =&\psi(h'|m').
    \end{align}
Which is what we wanted to prove.
\end{proof}
Note that if \cref{eq:subjectivelypossible} is not true and the probability of an input $i$ is impossible according to the POMDP transition function, the kernel $\psi$, and the optimal policy $\omega$ then \cref{eq:influencedtradconsistencyprop} puts no constraint on the machine kernel $\upd$ since both sides are zero. So the behavior of the \smm in this case is arbitrary. This makes sense since according to the POMDP that we use to interpret the machine this input is impossible, so our interpretation should tell us nothing about this situation.


\section{Sondik's example}
\label{app:sondikexample}
We now consider the example from \cite{sondik_optimal_1978}. This has a known optimal solution. We constructed a \smm from this solution which has an interpretation as a solution to Sondik's POMDP. This proves existence of \smms with such interpretations.

Consider the following \smm:
\begin{itemize}
    \item State space $\State \coloneqq [0,1]$. (This state will be interpreted as the belief probability of the hidden state being equal to $1$.) 
    \item input space $\In=\{1,2\}$
    \item machine kernel $\upd:\In \times \State \to P\State$ defined by deterministic function $g:\In \times \State \to \State$:
    \begin{align}
        \upd(m'|s,m)\coloneqq \delta_{g(s,m)}(m')
    \end{align}
    where
    \begin{align}
         g(S=1,m)\coloneqq \begin{cases}
         \frac{15}{6 m+20}-\frac{1}{2} &\text{ if } 0 \leq m \leq 0.1188\\
         \frac{9}{5} - \frac{72}{5 m+60} &\text{ if } 0.1188 \leq m \leq 1.
        \end{cases}
    \end{align}
    and 
    \begin{align}
         g(S=2,m)\coloneqq \begin{cases}
         2+\frac{20}{3 m-15} &\text{ if } 0 \leq m \leq 0.1188\\
         - \frac{1}{5} - \frac{12}{5 m-40} &\text{ if } 0.1188 \leq m \leq 1.
        \end{cases}
    \end{align}
    \item output space $\Out \coloneqq \{1,2\}$
    \item expose kernel $\omega:\State \to \Out$ defined by
    \begin{align}
        \omega(m)\coloneqq \begin{cases}
        1 \text{ if } 0 \leq m < 0.1188\\
        2 \text{ if } 0.1188 \leq m \leq 1.
        \end{cases}
    \end{align}
\end{itemize}
A consistent interpretation as a solution to a POMDP for this \smm is given by
\begin{itemize}
    \item The POMDP with
    \begin{itemize}
        \item state space $\Hid\coloneqq \{1,2\}$
        \item action space equal to the output space $\Out$ of the machine above
        \item sensor space equal to the input space $\In$ of the machine above
        \item model kernel $\kappa:\Hid \times \Out \to \Hid \times \In$ defined by
        \begin{align}
            \kappa(h',s|h,a)\coloneqq \nu(h'|h,a)\phi(s|h',a)
        \end{align}
        where $\nu:\Hid \times \Out \to P\Hid$ and $\phi:\Hid \times \Out \to P\In$ are shown in \cref{tab:sondik}
        \begin{table}[t]
            \centering
            \begin{tabular}{c|c|c|c}
                Action $a \in \Out$ & $\nu(h'|h,A=a)$ & $\phi(s|h',A=a)$ & $r(h,A=a)$ \\
                \hline
                \hline
                $1$ & $\begin{pmatrix}
              \nicefrac{1}{5} & \nicefrac{1}{2}\\
              \nicefrac{4}{5} & \nicefrac{1}{2}
            \end{pmatrix}$ & $\begin{pmatrix}
              \nicefrac{1}{5} & \nicefrac{3}{5}\\
              \nicefrac{4}{5} & \nicefrac{2}{5}
            \end{pmatrix}$
                 & $\begin{pmatrix}
              4 \\
              -4 
            \end{pmatrix}$\\
                \hline
                $2$ & $\begin{pmatrix}
              \nicefrac{1}{2} & \nicefrac{2}{5}\\
              \nicefrac{1}{2} & \nicefrac{3}{5}
            \end{pmatrix}$ & $\begin{pmatrix}
              \nicefrac{9}{10} & \nicefrac{2}{5}\\
              \nicefrac{1}{10} & \nicefrac{3}{5}
            \end{pmatrix}$
                 & $\begin{pmatrix}
              0 \\
              -3 
            \end{pmatrix}$\\
            \end{tabular}
            \caption{Sondik's POMDP data.}
            \label{tab:sondik}
        \end{table}
        \item reward function $r:\Hid \times \Out \to \mathbb{R}$ also shown in \cref{tab:sondik}.
    \end{itemize}
    \item Markov kernel $\psi : \State \to P\Hid$ given by:
    \begin{align}
        \psi(h|m):=m^{\delta_1(h)}(1-m)^{\delta_2(h)}.
    \end{align}
\end{itemize}
To verify this we have to check that $(\Hid,\Out,\psi,\expose,\kappa)$ is a consistent Bayesian influenced filtering interpretation of the machine $(\State,\In,\upd)$. For this we need to check \cref{eq:influencedtradconsistency} with $\delta_{\alpha(m)}(a)\coloneqq \delta_{\omega(m)}(a)$. So for each each $m \in [0,1]$, $h' \in \{1,2\}$, $i \in \{1,2\}$, and $m' \in [0,1]$ we need to check:
\begin{align}
    \begin{split}
        \left(\sum_{h \in \Ihid}\right.&\left.\sum_{a \in \Iout}\kappa(h',i|h,a) \psi(h|m)\delta_{\omega(m)}(a) \right) \upd(m'|i,m)=\\
        &\psi(h'|m') \left(\sum_{h \in \Ihid}\sum_{a \in \Iout} \sum_{ h'' \in \Ihid}\kappa( h'',i|h,a) \psi(h|m)\delta_{\omega(m)}(a)\right) \upd(m'|i,m).
    \end{split} 
\end{align}
This is tedious to check but true. We would usually also have to show that $\omega$ is indeed the optimal policy for Sondik's POMDP but this is shown in \cite{sondik_optimal_1978}.


\section{POMDPs and belief state MDPs}
\label{app:beliefmdp}
Here we give some more details about belief state MDPs and the optimal value function and policy of \cref{eq:valuefun,eq:optimalpolicy}. There is no original content in this section and it follows closely the expositions in \cite{hauskrecht_value-function_2000,kaelbling_planning_1998}.

We first define an MDP and its solution and then discuss then add some details about the belief state MDP associated to a POMDP.
\begin{definition}
A \emph{Markov decision process} (MDP) can be defined as a tuple $(\mdpState,\mdpAct,\tran,\rew)$ consisting of
    a set $\mdpState$ called the \emph{state space},
    a set $\mdpAct$ called the \emph{action space},
    a Markov kernel $\tran:\mdpState \times \mdpAct \to P(\mdpState)$ called the \emph{transition kernel}, 
    and a \emph{reward function} $\rew:\mdpState \times \mdpAct \to \mathbb{R}$. 
    Here, the transition kernel takes a state $x \in \mdpState$ and an action $a \in \mdpAct$ to a probability distribution $\tran(x,a)$ over next states and the reward function returns a real-valued instantaneous reward $r(x,a)$ depending on the hidden state and an action. 
\end{definition}

A solution to a given MDP is a control policy. 
As the goal of the MDP we here choose the maximization of expected cumulative discounted reward for an infinite time horizon (an alternative would be to consider finite time horizons). 
This means an optimal policy maximizes 
\begin{align}
\label{eq:totalreward}
    \mathbb{E}\left[\sum_{t=1}^\infty \gamma^{t-1} \rew(x_t,a_t)\right].
\end{align}
where $0<\gamma<1$ is a parameter called the discount factor. This specifies the goal.

To express the optimal policy explicitly we can use the optimal value function $V^*:\mdpState \to \mathbb{R}$. This is the solution to the Bellman equation \cite{kaelbling_planning_1998}:
\begin{align}
    V^*(x)=&\max_{a \in \mdpAct} \left(\rew(x,a) + \gamma \sum_{x' \in \mdpState} \tran(x'|a,x) V^*(x') \right).
\end{align}
The optimal policy is then the function $\pi^*:\mdpState \to \mdpAct$ that greedily maximizes the optimal value function  \cite{kaelbling_planning_1998}: 
\begin{align}
    \pi^*(x)=&\argmax_{a \in \mdpAct} \left(\rew(x,a) + \gamma \sum_{x' \in \mdpState} \tran(x'|a,x) V^*(x') \right).
\end{align}

\subsection{Belief state MDP}

The belief state MDP for a POMDP (see \cref{def:pomdp}) is defined using the belief state update function of \cref{eq:beliefupdate}. We first define this function again here with an additional intermediate step:
    \begin{align}
        f(b,a,s)(h')\coloneqq& Pr(h'|b,a,s)\\
        =&\frac{Pr(h',s|b,a)}{Pr(s|b,a)}\\
        =&\frac{\sum_{h \in \Hid}\kappa(h',s|h,a)b(h)}{\sum_{\bar h,\bar h' \in \Hid} \kappa(\bar h',s|\bar h,a)b(\bar h)}.
    \end{align}
The function $f(b,a,s)$ returns the posterior belief over hidden states $h$ given prior belief $b \in P\Hid$, an action $a \in \mdpAct$ and observation $s \in \Sen$. The Markov kernel $\delta_{f}:P\Hid \times \Sen \times \Act \to PP\Hid$ associated to this function can be seen as a probability of the next belief state $b'$ given current belief state $b$, action $a$ and sensor value $s$:
\begin{align}
    Pr(b'|b,a,s) =\delta_{f(b,a,s)}(b').
\end{align}
Intuitively, the belief state MDP has as its transition kernel the probability $Pr(b'|b,a)$ expected over all next sensor values of the next belief state $b'$ given that the current belief state is $b$ the action is $a$ and beliefs get updated according to the rules of probability, so
\begin{align}
    Pr(b'|b,a)=& \sum_s Pr(b'|b,a,s) Pr(s|b,a)\\
    =& \sum_{s \in \Sen} \delta_{f(b,a,s)}(b') \sum_{h,h' \in \Hid} \kappa(h',s|h,a)b(h).
\end{align}
This gives some intuition behind the definition of belief state MDPs.
\begin{definition}
Given a POMDP $(\Hid,\Act,\Sen,\kappa,\rew)$ the \emph{associated belief state Markov decision process} (belief state MDP) is the MDP $(P\Hid,\Act,\btran,\brew)$ where
\begin{itemize}
    \item the state space $P\Hid$ is the space of probability distributions \emph{beliefs} over the hidden state of the POMDP. We write $b(h)$ for the probability of a hidden state $h \in \Hid$ according to belief $b \in P\Hid$.
    \item the action space $\mathcal{A}$ is the same as for the underlying POMDP
    \item the transition kernel $\kappa: P\Hid \times A \to P\Hid$ is defined as \cite[Section 3.4]{kaelbling_planning_1998} 
    \begin{align}
      \btran(b'|b,a)\coloneqq&
      \sum_{s \in \Sen} \delta_{f(b,a,s)}(b') \sum_{h,h' \in \Hid} \kappa(h',s|h,a)b(h).
    \end{align}
    \item the reward function $\brew:P\Hid \times \mdpAct \to \mathbb{R}$ is defined as
    \begin{align}
        \brew(b,a)\coloneqq\sum_{h \in \Hid} b(h) \rew(h,a).
    \end{align}
    So the reward for action $a$ under belief $b$ is equal to the expectation under belief $b$ of the original POMDP reward of that action $a$.
\end{itemize}
\end{definition}

\subsection{Optimal belief-MDP policy}

Using the belief MDP we can express the optimal policy for the POMDP.

The optimal policy can be expressed in terms of the \emph{optimal value function of the belief MDP}. This is the solution to the equation \cite{hauskrecht_value-function_2000}
\begin{align}
    V^*(b)=&\max_{a \in \mdpAct} \left(\brew(b,a) + \gamma \sum_{b' \in P\Hid} \btran(b'|a,b) V^*(b') \right)\\
    V^*(b)=&\max_{a \in \mdpAct} \left(\brew(b,a) + \gamma \sum_{b' \in P\Hid} \sum_{s \in \Sen} \delta_{f(b,a,s)}(b') \sum_{h,h' \in \Hid} \kappa(h',s|h,a)b(h) V^*(b') \right)\\
    V^*(b)=&\max_{a \in \mdpAct} \left(\brew(b,a) + \gamma \sum_{s \in \Sen} \sum_{h,h' \in \Hid} \kappa(h',s|h,a)b(h) V^*(f(b,a,s)) \right).
\end{align}
This is the expression we used in \cref{eq:valuefun}.
The optimal policy for the belief MDP is then \cite{hauskrecht_value-function_2000}:
\begin{align}
\pi^*(b)=&\argmax_{a \in \mdpAct} \left(\brew(b,a) + \gamma \sum_{s \in \Sen} \sum_{h,h' \in \Hid} \kappa(h',s|h,a)b(h) V^*(f(b,a,s)) \right).
\end{align}
This is the expression we used in \cref{eq:optimalpolicy}.

\end{document}